\documentclass[twoside,11pt]{article}

\usepackage{blindtext}

% Any additional packages needed should be included after jmlr2e.
% Note that jmlr2e.sty includes epsfig, amssymb, natbib and graphicx,
% and defines many common macros, such as 'proof' and 'example'.
%
% It also sets the bibliographystyle to plainnat; for more information on
% natbib citation styles, see the natbib documentation, a copy of which
% is archived at http://www.jmlr.org/format/natbib.pdf

% Available options for package jmlr2e are:
%
%   - abbrvbib : use abbrvnat for the bibliography style
%   - nohyperref : do not load the hyperref package
%   - preprint : remove JMLR specific information from the template,
%         useful for example for posting to preprint servers.
%
% Example of using the package with custom options:
%
% \usepackage[abbrvbib, preprint]{jmlr2e}

\usepackage{jmlr2e_review}

% Definitions of handy macros can go here

\usepackage[utf8]{inputenc}
\usepackage{graphicx}
\usepackage{xcolor}
\usepackage{amsfonts}
\usepackage{amsmath}
\usepackage{amssymb}
\usepackage{mathtools}
\usepackage{hyperref}
\usepackage{array}
\usepackage{multirow}

\newcommand{\R}{\mathbb{R}}

\newcommand{\N}{\mathcal{N}}

\newcommand{\hba}{\hat{\beta}_A}
\newcommand{\hbb}{\hat{\beta}_B}
\newcommand{\hbba}{\hat{\beta}_{BA}}
\newcommand{\hb}{\hat{\beta}}

\newcommand{\eps}{\epsilon}
\newcommand{\s}{\sigma}
\newcommand{\gam}{\gamma}
\newcommand{\PAT}{\mathcal{P}_{{A^\top}}}
\newcommand{\rvect}[1]{\begin{bmatrix} #1 \end{bmatrix}}
\newcommand*\cvect[3][]{
    \begin{bmatrix}\ifx\relax#1\relax\else#1\\\fi#2\\#3\end{bmatrix}
}

% Heading arguments are {volume}{year}{pages}{date submitted}{date published}{paper id}{author-full-names}

\usepackage{lastpage}
\jmlrheading{}{}{1-\pageref{LastPage}}{2/25}{X/XX}{21-0000}{Daniel Goldfarb and Paul Hand}

% Short headings should be running head and authors last names

\ShortHeadings{}{Daniel Goldfarb and Paul Hand}
\firstpageno{1}

\begin{document}

\title{Analysis of Overparameterization in Continual Learning under a Linear Model}

\author{\name Daniel Goldfarb \email goldfarb.d@northeastern.edu \\
       \addr Khoury College of Computer Sciences\\
       Northeastern University\\
       Boston, MA 02115, USA
       \AND
       \name Paul Hand \email p.hand@northeastern.edu \\
       \addr Khoury College of Computer Sciences\\
       Northeastern University\\
       Boston, MA 02115, USA}

\editor{}

\maketitle

\begin{abstract}%   <- trailing '%' for backward compatibility of .sty file

Autonomous machine learning systems that learn many tasks in sequence are prone to the catastrophic forgetting problem. Mathematical theory is needed in order to understand the extent of forgetting during continual learning. As a foundational step towards this goal, we study continual learning and catastrophic forgetting from a theoretical perspective in the simple setting of gradient descent with no explicit algorithmic mechanism to prevent forgetting. In this setting, we analytically demonstrate that overparameterization alone can mitigate forgetting in the context of a linear regression model. We consider a two-task setting motivated by permutation tasks, and show that as the overparameterization ratio becomes sufficiently high, a model trained on both tasks in sequence results in a low-risk estimator for the first task. As part of this work, we establish a non-asymptotic bound of the risk of a single linear regression task, which may be of independent interest to the field of double descent theory.

\end{abstract}

\begin{keywords}
  Continual Learning, Catastrophic Forgetting, Deep Learning Theory, Double Descent, Regularization
\end{keywords}

\section{Introduction}

The field of lifelong learning aims to develop machine learning systems that can continually learn many tasks by transferring and retaining knowledge from previously learned tasks without the need for retraining (\cite{chen2022lifelong}). This is an important problem because it can be a waste of computation to retrain neural networks from scratch. One challenge of continual learning is the effect of catastrophic forgetting, where models abruptly lose performance on previously learned tasks when training on new ones (\cite{french1999catastrophic}). The amount of expected catastrophic forgetting is affected by multiple aspects of the tasks, like the neural network architecture and task similarity or difficulty. Hence the field of continual learning aims to develop dedicated approaches to mitigate catastrophic forgetting. These include regularization-based methods, architectural modifications, and memory-replay methods (\cite{li2017learning, kirkpatrick2017overcoming, zenke2017continual, shin2017continual}).

In order to build an understanding of state-of-the-art continual learning methods, it is a prerequisite to understand simpler methods. One such simple method is naively modifying the overparameterization level. Modern neural networks typically operate in the extremely overparameterized regime, where the number of learned parameters is much larger than the number of samples in the training set. In practice overparameterized models perform better than their underparameterized counterparts (\cite{nakkiran2021deep, zhang2021understanding}). This phenomenon, coined double descent, sparked active research to explain this behavior (\cite{adlam2020understanding, blanc2020implicit}). While most double descent works contribute to the single-task setting, one continual learning related work (\cite{mirzadeh2022wide}) ran experiments on popular rotated image benchmarks to show that overparameterized neural networks may be less prone to forgetting than their moderately parameterized counterparts. \cite{goldfarb2023analysis} replicated these results for random data permutation tasks.

Mathematical theory is necessary in order to understand when continual learning algorithms are needed and when to expect performance degradation in practice. This theory is challenging because the theoretical analysis of trained neural networks is still a nascent field, and additional architectural and algorithmic modifications would increase the challenge of that analysis. As a foundation for understanding the algorithmic developments of continual learning, we aim to understand the simplest setting of gradient descent with linear models modified only by their overparameterization level (\cite{evron2023continual, lin2023theory}). This setting has been previously studied by theoretical papers for double descent in the single-task setting (\cite{muthukumar2020harmless, belkin2020two, bartlett2020benign, hastie2022surprises}). As summarized in \cite{dar2021farewell}, these works concurrently found that a linear model can enjoy optimal performance in the overparameterized regime when the effective dimensionality of the data is lower than the number of training data samples, and the true dimensionality of the data is higher than the number of training data samples. We aim to leverage existing single-task work as a foundation to quantify the effect of overparameterization in the multi-task setting. The model presented in \cite{hastie2022surprises} has a strong connection to neural networks, which we fully describe in Section \ref{sec:model}, and we take inspiration from this model in our work.

In this paper, we study a two-task continual learning setting with a latent space data model inspired by \cite{hastie2022surprises}. The tasks will be related by a random orthogonal transformation. This two-task setup is a mathematical idealization of the standard continual learning benchmark where tasks are related by a random permutation (\cite{kirkpatrick2017overcoming}). We demonstrate that, under this linear model, if overparameterization is sufficiently high and there are sufficiently many training examples relative to the model's latent dimensionality, then this two-task setup has low risk on task A both before and after training on task B. 

We now present an informal version of our main conclusion. Consider tasks A and B each with $n$ training examples and $p$ observed features but only $d$ latent features. We are interested in the overparameterized regime where $d < n < p$. The data of each task is related by a random $p \times p$ orthogonal matrix. Given task A with data $X_A \in \R^{n \times p}$, task B is given by $X_B = X_AO^\top$. Both tasks have shared labels $y \in \R^n$ as is standard with permutation tasks. Our training procedure will fit task A by minimizing the square loss from initialization 0, and then subsequently training on task B with no explicit mechanism to prevent forgetting. Let $\hba$ be the parameters of the model that is trained on task A and $\hbba$ be the parameters of the model that is subsequently trained on task B. Our main result establishes non-asymptotic bounds with high probability on the risk of task A of several estimators. We informally present it as follows:

\begin{align}
    R(\hba) &\lesssim \sqrt{\frac{d}{n}} + \frac{n}{p}, \label{eq:intro-single} \\
    R(\hbba) &\lesssim \sqrt{\frac{d}{n}} + \sqrt{\frac{n}{p}}, \label{eq:intro-terminal} \\
    \frac{R(\hbba) - R(\hba)}{R(0) - R(\hba)} &\lesssim \frac{\sqrt\frac{n}{p}}{1 - C\left(\sqrt{\frac{d}{n}} + \frac{n}{p}\right)}, \label{eq:intro-ratio}
\end{align}

\noindent
where $R(\cdot)$ denotes the risk on task A and $C$ is a constant. Equation (\ref{eq:intro-single}) quantifies the performance of an estimator trained on a single task. Equation (\ref{eq:intro-terminal}) quantifies the performance on task A of the estimator that trains on task A then task B in sequence. Equation (\ref{eq:intro-ratio}) bounds the ratio between the forgetting due to training on task B relative to the amount of initial learning of task A. Observe that all three bounds are small when the number of latent features is much smaller than the number of training examples, and the number of training examples is much smaller than the number of features. Hence task A is well-learned initially, task A remains well-learned after training on task B, and the forgetting due to training on task B is insignificant in comparison to the amount of learning on task A when initialized at 0. These conditions are the same as the ones summarized by \cite{dar2021farewell} to describe when double descent occurs for the single-task case. The formal version of this result is presented in Section \ref{sec:theory} and its proof is provided in Appendix \ref{appendix-theory}. This work is an extension of a conference paper with the same authors as the present work (\cite{goldfarb2023analysis}) that quantified only $R(\hbba) - R(\hba)$. That work left open the questions of whether task A was well-learned in the first place and whether it is still well-learned after training on task B, which we resolve in the present work.

While the present paper's conclusions are framed around the two-task continual learning setting, its primary technical novelty is a non-asymptotic bound on the single-task risk in the linear regression model mentioned above. Existing work derived a risk expression in the asymptotic regime (\cite{hastie2022surprises}). In order to obtain results for problems with finite size, we establish a non-asymptotic version of this result which may be of independent interest. The proof uses random matrix estimates from \cite{vershynin2010introduction}. A more thorough comparison of our work and \cite{hastie2022surprises} is provided in Section \ref{sec:theory}.

The rest of the paper is organized as follows. In Section \ref{sec:model}, we formally present the statistical model of the two-task continual learning setup.  In Section \ref{sec:theory}, we present our main result which is a theorem that formalizes (\ref{eq:intro-single})-(\ref{eq:intro-ratio}). In Section \ref{sec:conclusion}, we discuss the limitations and potential future directions of our work.

% The results in the present work provide theoretical support to the notion that catastrophic forgetting is minimal when tasks are extremely orthogonal. Not only does forgetting go to $0$ as the overparameterization increases, but its scale compared to the potential forgetting of the system is small when sufficiently overparameterized. Along the way, we proved a non-asymptotic risk bound of a single task estimator under the latent space model of \cite{hastie2022surprises}, which is of independent interest to the field of double descent theory.

\section{Latent Space Regression Model} \label{sec:model}

In this section, we first present the latent space model from \cite{hastie2022surprises} and share the analogy between it and neural networks on vision tasks. Then we present an extension of the latent space model to the setting of two orthogonal transformation tasks as defined in \cite{goldfarb2023analysis}.

\subsection{Single-Task Latent Space Model} \label{one-task-model}

Consider a linear regression model with $d$-dimensional standard Gaussian latent features. Let the responses be generated by a noiseless inner product with some ground truth parameters $\theta \in \R^d$:

\begin{align}
    z &\sim \N(0, I_d), \label{data-start} \\
    y &= z^\top\theta.
\end{align}

\noindent
Instead of performing regression on the latent features, suppose we only have access to noisy $p$-dimensional random projections, $x$, of the latent features:

\begin{align}
    x &= Wz + u, \label{data-end}
\end{align}

\noindent
where $W \in \R^{p \times d}$ and $u \sim \N(0, I_p)$. Here the number of noisy features, $p$, controls the overparameterization level of the problem. While $W$ could be defined to have Gaussian entries, for mathematical convenience we consider the idealized case where $W^\top W 
= p\gam I_p$. $\gam$ acts as a signal-to-noise term and columns of $W$ are pairwise orthogonal and have equal length.

The regression problem provides $n$ training samples and corresponding responses in the form of data $(X, y) \in \R^{n \times p} \times \R^n$, where each of the $n$ rows of $X$ and entries of $y$ are sampled independently by (\ref{data-start}) - (\ref{data-end}).

Consider a model that makes predictions on the observed features linearly:

\begin{align}
    f_{\hb} : x \mapsto x^\top\hb,
\end{align}

\noindent
where $\hb \in \R^p$ are the parameters of the model. We train the parameters using gradient descent with a square loss. Due to the overparameterized nature of the problem ($d<n<p$), the result of gradient descent will depend on initialization and converge to a solution that fits to the training data. We consider solving $\arg\min_{\hb} \frac{1}{2}\|X\hb-y\|^2$ using gradient descent from initialization $\beta_0$. There is an analytical solution to this given by

\begin{align}
    \arg\min_{\hat{\beta}} \frac{1}{2}\|\hat{\beta} - \beta_0\|^2 \text{ s.t. } y = X\hb. \label{opt-prob}
\end{align}

\noindent
The optimization problem in (\ref{opt-prob}) has the following closed form solution when $X$ is full-rank (which occurs with probability $1$ under the above model):

\begin{align}
    \hat{\beta} &=\beta_0 + X^\top(XX^\top)^{-1}y-X^\top(XX^\top)^{-1}X\beta_0 \\
    &=\beta_0 + X^\top(XX^\top)^{-1}y-\mathcal{P}_{X^\top}\beta_0,
\end{align}

\noindent
where $\mathcal{P}_{X^\top}$ is the orthogonal projector onto the range of $X^\top$. We study the performance of an estimator by its statistical risk, which is given by

\begin{align}
    R(\hb) = \s^2 + (\hb-\beta)^\top\Sigma(\hb-\beta), \label{risk}
\end{align}

\noindent
where
\begin{align}
    \Sigma &= WW^\top + I_p, \\
    \beta &= (I+WW^\top)^{-1}W\theta, \\
    \sigma^2 &= \theta^\top(W^\top W + I_d)^{-1}\theta.
\end{align}

\noindent
The derivation of (\ref{risk}) follows from showing that the data model of (\ref{data-start}) - (\ref{data-end}) is equivalent to a modified model where the noise occurs on the responses instead of the observed features (\cite{hastie2022surprises}). This modified model is more amenable to analysis and its equivalence is detailed in Appendix E of \cite{goldfarb2023analysis}.

\begin{table} \caption{Analogy between neural networks and the linear model from Section \ref{one-task-model}.} 

\label{analogy-table}
\begin{center}
\begin{tabular}{ | m{3.1cm} | m{5.2cm}| m{5.2cm} | } 
    \hline
     & \textbf{Linear Model in Section \ref{one-task-model}} & \textbf{Neural Networks for Vision}  \\
  \hline
  Latent Features & Labels are generated from low-dimensional latent features $z$. & Many classes of natural images have an underlying low dimensionality. \\ 
  \hline
  Observed Features & Model has access to high-dimensional random features, $x$. & Model has access to high-dimensional pixel representation. \\ 
  \hline
  Noise within Observed Features & Observed features contain additive Gaussian noise controlled by $\gam$. & Observed features may contain noise, for example due to sensor approximation. \\ 
  \hline
\end{tabular}
\end{center}
\end{table}

The data distribution of the above model mirrors the supervised learning setting of modern neural networks. We flesh out this analogy in Table \ref{analogy-table}.

\subsection{Two-Task Latent Space Model} \label{two-task-model}

We now extend the data model of Section \ref{one-task-model} to study two tasks which are related by a random orthogonal transformation. This provides tasks that are equally difficult (for a fully-connected model) and are similar to the random data permutation benchmark of continual learning (\cite{de2021continual}).

Consider two tasks, $A$ and $B$, with $n$ examples each. Task $A$ has data $(X_A, y) \in \R^{n \times p} \times \R^n$ where each of the $n$ rows of $X_A$ and entries of $y$ are sampled independently by (\ref{data-start}) - (\ref{data-end}) for a fixed $\theta$. Define $O$ to be sampled uniformly from the set of $p \times p$ orthogonal matrices. Task B then has data $(X_B, y)$ where $X_B = X_A O^\top$. Observe that $y$ is the same for tasks A and B as is standard for random permutation tasks.

Recall from Section \ref{one-task-model} that each estimator is given by a solution to a least squares problem from a variety of initializations. Let $\hba$ be the solution to task A when initialized at $0$, $\hbb$ be the solution to task B when initialized at $0$, and $\hbba$ be the solution to task B when initialized at $\hba$. Using the closed form solutions from Section \ref{one-task-model}, we get the following formulas for these parameters:

\begin{align}
    \hba &= X_A^\top(X_AX_A^\top)^{-1}y, \label{eq-beta-A} \\
    \hbb &= X_B^\top(X_BX_B^\top)^{-1}y, \\
    \hbba &= \hba + \hbb - \mathcal{P}_{X_B^\top}\hba. \label{eq-beta-BA}
\end{align}

\noindent
Under the two-task model, we are interested in the risk of these estimators on task A as given by (\ref{risk}).

This two-task formulation was first used in  \cite{goldfarb2023analysis} which provided an analytical illustration of the previously observed empirical behavior that overparameterized neural networks can be less prone to catastrophic forgetting (\cite{mirzadeh2022wide}). This empirical observation was made on the popular image rotation/permutation continual learning benchmarks. Under this setting, the first task of the problem is the original image classification task, and subsequent tasks are random rotations/permutations of the original task. This provides tasks that are nearly orthogonal, thus allowing for subsequent training with minimal forgetting (\cite{farajtabar2020orthogonal}). \cite{goldfarb2023analysis} aimed to develop a linear regression data model that mirrored relevant permuted image benchmarks for their analysis. We use the same model in this work for its mathematical tractability.

\section{Main Result} \label{sec:theory}

Under the two-task linear model introduced in Section \ref{two-task-model}, our main result quantifies the risk on task A of estimators (\ref{eq-beta-A})-(\ref{eq-beta-BA}). This provides parameter regimes that guarantee that:

\begin{itemize}
    \item Task A is initially well-learned\footnote{We will say that a task is well-learned when it has risk significantly smaller than that of the null estimator.}.
    \item After subsequent training on task B, the risk on task A remains well-learned.
    \item The amount of forgetting is small compared to the amount of initial learning.
\end{itemize}

\noindent
For mathematical convenience, suppose $W$ satisfies the following assumption:

\begin{assumption} \label{body-assumption}
    All columns of $W$ are pair-wise orthogonal and have equal length. That is, $W^\top W = p\gamma I_d$.
\end{assumption}

\noindent
This assumption is an idealization of Gaussian $W$ which was also made in \cite{hastie2022surprises}. 

We will compare the performance of our learned estimators to the 0 predictor, which we call the (unlearned) null risk baseline. To demonstrate that task A is well-learned, it must outperform this null risk. By Lemma F.3 in \cite{goldfarb2023analysis}, null risk has an exact value of $R(0) = \|\theta\|^2$. Our main result is the following:

\begin{theorem}\label{thm:main-body} Fix $\theta \in \R^d$. Let tasks $A,B$ be given by Section \ref{two-task-model}. Let $W \in \R^{p \times d}$ satisfy Assumption \ref{body-assumption} and $n \ge d, p \ge 20n$, $\gam \ge \frac{1}{\sqrt{nd}}$. Then there exists constant $c>0$ such that with probability at least $1-20e^{-cd}$, the following holds:

\begin{align}
    R(\hba) &\le \left(72\sqrt{\frac{d}{n}} + \frac{18n}{p}\right)\|\theta\|^2, \label{eq:single} \\
    R(\hbba) &\le \left(72\sqrt{\frac{d}{n}} + 96\sqrt{\frac{n}{p}}\right)\|\theta\|^2, \label{eq:terminal} \\
    \frac{R(\hbba) - R(\hba)}{R(0) - R(\hba)} &\le \frac{78\sqrt\frac{n}{p}}{1 - 72\sqrt{\frac{d}{n}} - \frac{18 n}{p}}. \label{eq:ratio}
\end{align}

\end{theorem}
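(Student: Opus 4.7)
My plan is to establish the single-task bound (\ref{eq:single}) first, then leverage the additive structure of $\hbba$ to prove (\ref{eq:terminal}), and finally combine both with the explicit value $R(0)=\|\theta\|^2$ to obtain the ratio bound (\ref{eq:ratio}).

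I would begin by passing to the equivalent modified data model from Section \ref{one-task-model}, in which the observed feature has covariance $\Sigma = WW^\top + I_p$ and $y = X\beta + \eps$ with $\eps \sim \N(0,\sigma^2 I_n)$. Under Assumption \ref{body-assumption}, $\Sigma$ has eigenvalue $1+p\gam$ on $\operatorname{range}(W)$ and eigenvalue $1$ on its complement, and one checks $\sigma^2 + \beta^\top \Sigma \beta = \|\theta\|^2$. The estimator error decomposes as
\begin{align*}
\hba - \beta = -(I - \mathcal{P}_{X_A^\top})\beta + X_A^\top (X_A X_A^\top)^{-1}\eps,
\end{align*}
so $R(\hba) - \sigma^2$ splits into a bias plus a variance piece. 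For the bias I would condition on the latent matrix $Z_A$ and apply the singular-value concentration bounds of \cite{vershynin2010introduction} to the $n\times d$ sub-block of $X_A$ aligned with $\operatorname{range}(W)$, yielding a contribution of order $\sqrt{d/n}\|\theta\|^2$. For the variance I would bound $\operatorname{tr}(\Sigma X_A^\top (X_A X_A^\top)^{-2} X_A)$ using the same random matrix estimates, producing the $n/p\cdot\|\theta\|^2$ term, and hence (\ref{eq:single}).

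For the two-task estimator I would use the decomposition
\begin{align*}
\hbba - \beta = (I - \mathcal{P}_{X_B^\top})(\hba - \beta) + (\hbb - \mathcal{P}_{X_B^\top}\beta).
\end{align*}
The second summand is distributionally identical to the single-task error on $A$, since $X_B = X_A O^\top$ with $O$ Haar-orthogonal, so its $\Sigma$-weighted norm is already controlled by (\ref{eq:single}). The first summand is the projection of $\hba - \beta$ onto $\ker(X_B)$; because $\mathcal{P}_{X_B^\top}$ acts nearly isotropically on $\operatorname{range}(W)$ and the large eigenvalue $1 + p\gam$ of $\Sigma$ amplifies a residual of relative size $\sqrt{n/p}$, expanding the squared norm gives the $\sqrt{n/p}\|\theta\|^2$ scaling of (\ref{eq:terminal}) rather than $n/p\cdot\|\theta\|^2$. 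The ratio bound (\ref{eq:ratio}) then follows by writing $\hbba - \hba = \hbb - \mathcal{P}_{X_B^\top}\hba$, applying Cauchy--Schwarz to $R(\hbba) - R(\hba) = \|\Sigma^{1/2}(\hbba - \hba)\|^2 + 2\langle\Sigma(\hba - \beta), \hbba - \hba\rangle$, and lower-bounding $R(0) - R(\hba) \ge (1 - 72\sqrt{d/n} - 18 n/p)\|\theta\|^2$ via (\ref{eq:single}).

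The main obstacle will be the non-asymptotic bias analysis in the first step: to recover a $\sqrt{d/n}$ rate (matching the asymptotic behavior of \cite{hastie2022surprises}) with explicit constants requires a careful two-stage conditioning -- first on $Z_A$, then on the additive noise $U_A$ -- together with sharp tail control on both the smallest and largest singular values of a tall Gaussian-like block of $X_A$. The subsequent transfer to (\ref{eq:terminal}) and (\ref{eq:ratio}) is more mechanical, but tight union bounding across all random-matrix events is required to keep the failure probability below $20e^{-cd}$ and to pin down the universal constants $72$, $18$, $96$, and $78$ appearing in the statement.
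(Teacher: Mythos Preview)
Your single-task plan for (\ref{eq:single}) matches the paper closely: the paper also splits $\hba-\beta$ into the bias $\mathcal{P}_{{A^\top}^\perp}\beta$ and the noise term $A^\top(AA^\top)^{-1}\eps$, bounds the bias via singular-value estimates on the $n\times d$ block of $A$ lying in $\operatorname{range}(W)$ (Lemma~\ref{term3}, using Theorem~5.39 of \cite{vershynin2010introduction}), and bounds the noise term via a cited lemma. Your ``two-stage conditioning on $Z_A$ then $U_A$'' is not needed---the paper works directly with the block decomposition $A=[A_1\;A_2]$---but the substance is the same.

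For (\ref{eq:terminal}) and (\ref{eq:ratio}), however, the paper does \emph{not} re-analyze $\hbba-\beta$ from scratch. It simply imports the forgetting bound
\[
R(\hbba)-R(\hba)\le\Bigl(66\sqrt{\tfrac{n}{p}}+\tfrac{12}{p\gam}\Bigr)\|\theta\|^2
\]
from Theorem~F.11 of \cite{goldfarb2023analysis}, then adds this to (\ref{eq:single}) to get (\ref{eq:terminal}), and divides by $R(0)-R(\hba)\ge\bigl(1-72\sqrt{d/n}-18n/p\bigr)\|\theta\|^2$ to get (\ref{eq:ratio}). No Cauchy--Schwarz on the cross term is needed; the numerator of (\ref{eq:ratio}) is bounded directly.

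Your alternative route for (\ref{eq:terminal}) has a genuine gap. In the decomposition
\[
\hbba-\beta=(I-\mathcal{P}_{X_B^\top})(\hba-\beta)+(\hbb-\mathcal{P}_{X_B^\top}\beta),
\]
the second summand is \emph{not} distributionally identical to the single-task error $\hba-\beta$. Since $y=X_B\,O\beta+\eps$, one has $\hbb=\mathcal{P}_{X_B^\top}O\beta+X_B^\top(X_BX_B^\top)^{-1}\eps$, so
\[
\hbb-\mathcal{P}_{X_B^\top}\beta=\mathcal{P}_{X_B^\top}(O-I)\beta+O\,A^\top(AA^\top)^{-1}\eps,
\]
which is structurally different from $-\mathcal{P}_{{A^\top}^\perp}\beta+A^\top(AA^\top)^{-1}\eps$. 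Moreover, the rows of $X_B$ have covariance $O\Sigma O^\top\neq\Sigma$, so the $\Sigma$-weighted norm of this summand is not controlled by (\ref{eq:single}); the large eigenvalue $1+p\gam$ of $\Sigma$ sits on $\operatorname{range}(W)$, while $\mathcal{P}_{X_B^\top}(O-I)\beta$ and $O\,A^\top(AA^\top)^{-1}\eps$ have no reason to avoid that subspace. Controlling this term is exactly the content of the forgetting analysis in \cite{goldfarb2023analysis}, which requires a separate argument exploiting the Haar randomness of $O$; you cannot shortcut it by an appeal to symmetry with task~$A$.
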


\noindent
Equation (\ref{eq:single}) quantifies the risk on task A of the estimator trained exclusively on task A. Thus observe that when $n$ scales linearly with $d$ and $p$ scales linearly with $n$, the risk is small and task A is well-learned. Equation (\ref{eq:terminal}) quantifies the performance on task A of an estimator that trains on task A then task B in sequence. Under the same linear scalings as before, observe that task A stays well-learned after training on task B. Finally, equation (\ref{eq:ratio}) asserts that the fraction of performance loss as a result of forgetting compared to the performance gain due to original training is small under this parameter regime. 
Note that as $p \rightarrow \infty$, the bounds we provide on $R(\hba)$ and $R(\hbba)$ are monotonically decreasing but do not go to 0. Indeed performance is limited by the ratio of latent dimensionality and number of training samples. The bound on the risk ratio in (\ref{eq:ratio}) does go to 0, thus forgetting is fully ameliorated for this model in this setting. The conclusions about risk hold under a wide range of signal-to-noise ratios, even as low as $\gam=\frac{1}{\sqrt{nd}}$. Observe that the failure probability is exponentially small in $d$. This is due to the fact that the proof is based on non-asymptotic random matrix results. Each of (\ref{eq:single})-(\ref{eq:ratio}) are proven in Theorems \ref{single-risk}, \ref{term-A}, and \ref{scale-forgetting} respectively.

The bound in (\ref{eq:single}) may be of independent to the field of double descent theory. It is a non-asymptotic result similar to the latent space result in \cite{hastie2022surprises}. Their result established that if $d, n, p \rightarrow \infty$ with fixed ratios $\frac{d}{p}, \frac{n}{p}$ then an explicit expression for risk can be derived that exhibits double descent. This result is asymptotic in nature. In order to obtain a risk estimate for problems with finite size, we establish a non-asymptotic version of this result. Our result relies on non-asymptotic singular value estimates, such as those in \cite{vershynin2010introduction}. As (\ref{eq:single}) is monotonically decreasing in $p$, it demonstrates a double descent effect and hence this technical novelty may be of independent interest to the overparameterized machine learning theory community.

\section{Discussion and Conclusion} \label{sec:conclusion}

In this work we present a two-task linear model where data is generated from a latent space model and tasks are related by a random orthogonal transformation. We show theoretically that when the model is suitably overparameterized, forgetting is small and allows for high performance on the first task both before and after training on the second task.

\begin{figure}
\begin{center}
\includegraphics[scale=0.23]{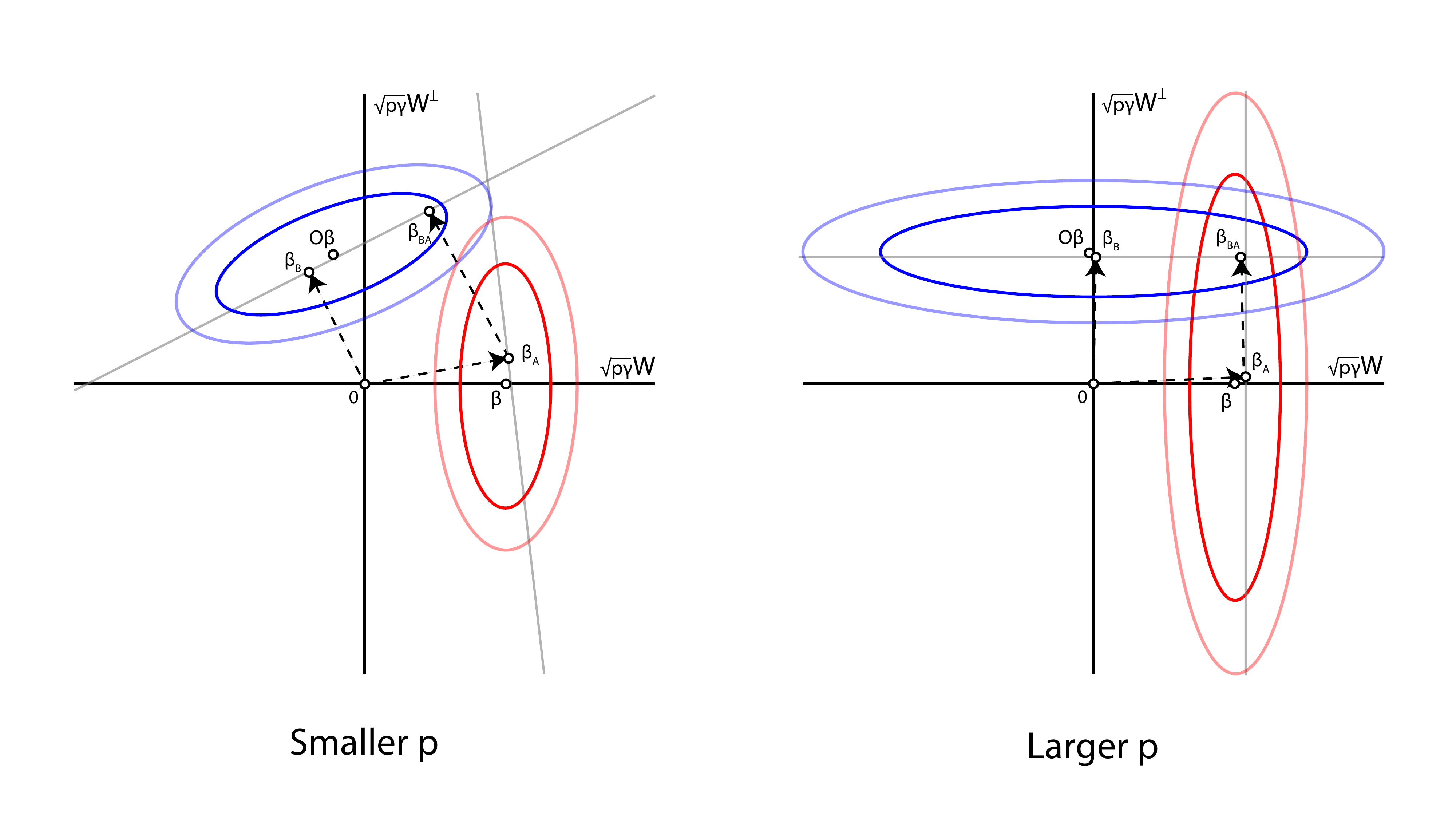}
\caption{Vertical and horizontal axes (solid black lines) depict the span of $W$ and its orthogonal complement (scaled for ease of visualization). The true parameters of task A are given by $\beta$ and the true parameters of task B are given by $O\beta$. Notice that $\beta$ lies directly on $W$ and $O\beta$ lies further from $W^\perp$ when $p$ is smaller but closer to $W^\perp$ when $p$ is larger. The lighter solid lines (gray) depict the set of solutions to $X_A\beta=y$ and $X_B\beta=y$. The predictors $\hba, \hbb, \hbba$ are given by orthogonally projecting from initialization onto these lines, but $\beta$ and $O\beta$ do not lie exactly on the gray lines due to the noise in the observed features. The left (red) and right (blue) ellipses depict level sets of constant risk for tasks A and B respectively. Notice that the aspect ratio of these ellipses is more extreme for larger $p$ and under this regime $\hbba$ has similar risk on task A as $\hba$.}
\label{fig:illustration}
\end{center}
\end{figure}

Overparameterization provides the benefit that catastrophic forgetting is minimal. This effect can be seen geometrically in Figure \ref{fig:illustration}. As seen in this figure, level sets of constant risk become orthogonal to each other and get extremely high aspect ratios in the overparameterized regime. Thus the learned predictor is able to train on each task in sequence and retain high performance on both tasks at the end of training.

% Overparameterization is a central theme for each of the risk bounds in this work. A benefit of overparameterization as it pertains to catastrophic forgetting can be explained by its relationship with orthogonality between tasks. Under the random orthogonal transformation model, a high overparameterization level leads to tasks that are more orthogonal in parameter space, which allows for subsequent training with minimal interference between performances on each task. We provide an illustration of this effect in Figure \ref{illustration}. As shown, if $p$ is large enough then $\hbba$ is able to perform well on task A and task B simultaneously.

We hope that this work may inspire several avenues of future research. Our work was inspired by random data permutation tasks for their mathematical tractability but an extension to other relevant continual learning benchmarks like split tasks would be a beneficial step towards a better understanding of forgetting regimes. Additionally, the field would benefit from the direct analysis of existing continual learning methods with the goal of better understanding why they work well in practice. Lastly, the analysis of more realistic non-linear or even neural network models would provide a theoretical understanding that is more closely tied to the techniques currently used in practice. Eventually we hope that such theoretical analysis can help in the development of future continual learning approaches.

% Acknowledgements and Disclosure of Funding should go at the end, before appendices and references

\acks{DG and PH acknowledge support from NSF Awards DMS-2053448, DMS-1848087. PH acknowledges further support from NSF Award DMS-2022205.}

% Manual newpage inserted to improve layout of sample file - not
% needed in general before appendices/bibliography.

\newpage

\appendix

\section{} \label{appendix-theory}

\begin{assumption} \label{assumption}
    All columns of $W$ are pair-wise orthogonal and of equal length. Namely, $W^\top W = p\gamma I_d$.
\end{assumption} 

\noindent
A consequence of Assumption \ref{assumption} is that $WW^\top = p\gam \mathcal{P}_W$. This formally proven in Lemma F.2 of \cite{goldfarb2023analysis}.

\begin{lemma} \label{beta-norm} Suppose $W \in \R^{p \times d}$ follows Assumption \ref{assumption}. Define $\beta = W(W^\top W + I_d)^{-1}\theta$ for some $\theta \in \R^d$. Then

\begin{align}
    \|\beta\|^2 = \frac{p\gam}{(p\gam+1)^2}\|\theta\|^2
\end{align}

\end{lemma}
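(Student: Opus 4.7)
The plan is to exploit Assumption \ref{assumption} directly in order to collapse $(W^\top W + I_d)^{-1}$ to a scalar multiple of the identity, and then evaluate the squared norm by another application of the same identity. This is essentially a two-line computation once the scalar simplification is in place.

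First I would note that under Assumption \ref{assumption}, we have $W^\top W + I_d = (p\gamma + 1) I_d$, so
\begin{align}
    (W^\top W + I_d)^{-1} = \frac{1}{p\gamma + 1} I_d.
\end{align}
Substituting this into the definition of $\beta$ gives $\beta = \frac{1}{p\gamma + 1} W\theta$. Then I would compute
\begin{align}
    \|\beta\|^2 = \beta^\top \beta = \frac{1}{(p\gamma+1)^2}\, \theta^\top W^\top W\, \theta = \frac{p\gamma}{(p\gamma+1)^2}\, \theta^\top \theta,
\end{align}
where in the last step I use $W^\top W = p\gamma I_d$ for the second time. This yields the claimed identity.

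There is no real obstacle in this argument; the entire content of the lemma is the observation that Assumption \ref{assumption} makes $W^\top W$ a scalar matrix, which renders both the resolvent $(W^\top W + I_d)^{-1}$ and the Gram matrix in $\|W\theta\|^2$ trivial to evaluate. The lemma is best viewed as a bookkeeping fact that isolates the scalar prefactor $\frac{p\gamma}{(p\gamma+1)^2}$ so that it can be reused cleanly in subsequent risk computations.
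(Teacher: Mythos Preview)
Your proof is correct and follows essentially the same approach as the paper: both arguments substitute $W^\top W = p\gamma I_d$ from Assumption~\ref{assumption} to reduce $(W^\top W + I_d)^{-1}$ to a scalar multiple of the identity and then evaluate the resulting quadratic form. The only cosmetic difference is that the paper writes out $\|\beta\|^2 = \theta^\top (W^\top W + I_d)^{-1} W^\top W (W^\top W + I_d)^{-1}\theta$ in one line before substituting, whereas you simplify $\beta$ first and then square; the computations are identical.
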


\begin{proof} By definition,

\begin{align}
    \|\beta\|^2 = \theta^\top(W^\top W + I_d)^{-1}W^\top W (W^\top W + I_d)^{-1} \theta
\end{align}

\noindent
Using Assumption \ref{assumption},

\begin{align}
    \|\beta\|^2 &= \theta^\top(p\gamma I_d + I_d)^{-1}p\gamma I_d (p\gamma I_d + I_d)^{-1} \theta \\
    &= \frac{p\gam}{(p\gam+1)^2}\|\theta\|^2
\end{align}
    
\end{proof}

\begin{lemma} \label{s-approx}
    Assume $W \in \R^{p \times d}$ satisfies Assumption \ref{assumption}. Then
    
    \begin{align}
        WW^\top = p\gamma\mathcal{P}_W
    \end{align}

    \noindent
    where $\mathcal{P}_W$ is the orthogonal projection onto the range of $W$.
\end{lemma}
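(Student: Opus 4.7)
The plan is to invoke the standard closed-form expression for the orthogonal projector onto the column space of a matrix with linearly independent columns. Under Assumption \ref{assumption}, the Gram matrix $W^\top W = p\gamma I_d$ is invertible (for any $\gamma>0$), which forces $W$ to have rank $d$ and hence linearly independent columns. Consequently, the orthogonal projector onto $\mathrm{range}(W)$ is given by the well-known formula
\begin{align}
    \mathcal{P}_W = W(W^\top W)^{-1}W^\top.
\end{align}
Substituting the assumption $W^\top W = p\gamma I_d$ directly into this formula yields $\mathcal{P}_W = \tfrac{1}{p\gamma} WW^\top$, and rearranging gives $WW^\top = p\gamma\,\mathcal{P}_W$.

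The entire argument is essentially a single substitution, so there is no real obstacle; the only point worth checking is that the hypothesis implicitly requires $\gamma \neq 0$ in order for the projector formula to be well-defined, which is consistent with the regime $\gamma \ge \tfrac{1}{\sqrt{nd}}$ used in the main theorem. An equivalent route, if one prefers not to cite the projector formula, is to note that $U := W/\sqrt{p\gamma}$ satisfies $U^\top U = I_d$ so that $U$ has orthonormal columns; then $UU^\top$ is the orthogonal projector onto $\mathrm{range}(U)=\mathrm{range}(W)$, and multiplying by $p\gamma$ gives $WW^\top = p\gamma\,\mathcal{P}_W$. Either version can be written in just a few lines.
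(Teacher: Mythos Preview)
Your argument is correct; the paper does not give an in-line proof at all but simply cites Lemma~F.2 of \cite{goldfarb2023analysis}, so you have supplied the standard derivation that the reference presumably contains. Either of your two routes (the $W(W^\top W)^{-1}W^\top$ formula or the normalization $U = W/\sqrt{p\gamma}$) is exactly what one would expect behind that citation.
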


\begin{proof} Lemma F.2 in \cite{goldfarb2023analysis}.

\end{proof}

\begin{lemma} \label{null-risk}
    Let $\Sigma = WW^\top+I_p$ where $W \in \R^{p \times d}$ satisfies Assumption \ref{assumption}. For some $\theta \in \R^d$, let $\beta = W(W^\top W + I_d)^{-1}\theta$. Then
    
    \begin{align}
        R(f_{0}) = \|\theta\|^2
    \end{align}
\end{lemma}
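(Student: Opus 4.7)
The plan is to substitute $\hb = 0$ directly into the risk expression (\ref{risk}), namely $R(0) = \s^2 + \beta^\top \Sigma \beta$, and then reduce the matrix quantities to scalar multiples of $\|\theta\|^2$ using Assumption \ref{assumption} (which forces $W^\top W$ and $WW^\top$ to be multiples of projections, per Lemma \ref{s-approx}). Since $\Sigma = WW^\top + I_p$, the cross term splits as
\begin{equation*}
\beta^\top \Sigma \beta \;=\; \|W^\top \beta\|^2 + \|\beta\|^2,
\end{equation*}
so the work reduces to evaluating these two pieces plus $\s^2$.

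For the $\|\beta\|^2$ term I would cite Lemma \ref{beta-norm} directly, which yields $\|\beta\|^2 = \tfrac{p\gam}{(p\gam+1)^2}\|\theta\|^2$. For $\|W^\top \beta\|^2$, I would apply Assumption \ref{assumption} to get $W^\top \beta = W^\top W (W^\top W + I_d)^{-1}\theta = \tfrac{p\gam}{p\gam+1}\theta$, whence $\|W^\top\beta\|^2 = \tfrac{p^2\gam^2}{(p\gam+1)^2}\|\theta\|^2$. Summing these two contributions gives $\beta^\top \Sigma \beta = \tfrac{p\gam(p\gam+1)}{(p\gam+1)^2}\|\theta\|^2 = \tfrac{p\gam}{p\gam+1}\|\theta\|^2$. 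Finally, the noise term simplifies under Assumption \ref{assumption} as $\s^2 = \theta^\top (p\gam I_d + I_d)^{-1}\theta = \tfrac{1}{p\gam+1}\|\theta\|^2$, and adding produces the clean telescoping
\begin{equation*}
R(0) \;=\; \frac{1}{p\gam+1}\|\theta\|^2 + \frac{p\gam}{p\gam+1}\|\theta\|^2 \;=\; \|\theta\|^2.
\end{equation*}

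There is no real obstacle here: once Lemma \ref{beta-norm} and Assumption \ref{assumption} are in hand, the computation is a direct substitution in which the $\s^2$ piece and the $\|W^\top\beta\|^2$ piece combine with $\|\beta\|^2$ to telescope exactly. The only thing to watch is the two equivalent forms of $\beta$ appearing in the paper, $(I+WW^\top)^{-1}W\theta$ in Section \ref{one-task-model} versus $W(W^\top W + I_d)^{-1}\theta$ in Lemma \ref{null-risk}; these agree by the push-through identity, so working with the latter (as Lemma \ref{beta-norm} already does) is consistent.
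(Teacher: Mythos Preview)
Your computation is correct: substituting $\hb=0$ into (\ref{risk}), splitting $\beta^\top\Sigma\beta$ as $\|W^\top\beta\|^2+\|\beta\|^2$, and reducing each piece via Assumption \ref{assumption} and Lemma \ref{beta-norm} gives exactly $\|\theta\|^2$. The paper itself does not prove this lemma in-line but simply cites Lemma F.3 of \cite{goldfarb2023analysis}; your argument is therefore a self-contained replacement for that external reference, and is presumably the same elementary calculation carried out there.
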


\begin{proof} Lemma F.3 in \cite{goldfarb2023analysis}.

\end{proof}

\begin{lemma} \label{term3}
    Let $W=\cvect{\sqrt{p\gam}I_d}{0} \in \R^{p \times d}$ and define data matrix $A \in \R^{n \times p}$ whose rows are independent random vectors in $\R^p$ with covariance $\Sigma = WW^\top + I_p$ and $d \le n$, $2n \le p$. Assume $\gam \ge \frac{1}{\sqrt{nd}}$. Then with probability at least $1-8e^{-cd}$ for universal constant $c>0$,

    \begin{align}
        \|\mathcal{P}_{{A^\top}^\perp}e_1\|^2 \le 18\sqrt{\frac{d}{n}}
    \end{align}

    \noindent
    where $\mathcal{P}_{{A^\top}^\perp}$ is the orthogonal projection on the orthogonal complement of the range of $A^\top$ and $e_1$ is the $p$-dimensional first standard basis vector.
\end{lemma}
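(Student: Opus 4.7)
The plan is to exploit the block structure of the covariance. Since $W$ is supported on its first $d$ coordinates, $\Sigma = WW^\top + I_p$ is block diagonal with $(p\gam+1)I_d$ on the top-left and $I_{p-d}$ on the bottom-right. Writing $A = [A_1 \mid A_2]$ with $A_1 \in \R^{n \times d}$ and $A_2 \in \R^{n \times (p-d)}$, and using that the rows of $A$ are jointly Gaussian under the paper's data model, the blocks $A_1$ and $A_2$ are \emph{independent} with $A_1 = \sqrt{p\gam+1}\,G_1$ and $A_2 = G_2$ for Gaussian matrices $G_1 \in \R^{n \times d}$, $G_2 \in \R^{n \times (p-d)}$ with i.i.d.\ $\N(0,1)$ entries. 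Since $p \ge 2n$ and $n \ge d$ imply $p - d \ge n$, the matrix $G_2 G_2^\top$ is invertible almost surely.

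First I would derive a closed-form expression for $\|\mathcal{P}_{{A^\top}^\perp}e_1\|^2$. Let $e_1^{(d)} \in \R^d$ denote the first standard basis vector in $\R^d$ and set $s := p\gam+1$. Then $A e_1 = \sqrt{s}\, G_1 e_1^{(d)}$ and $AA^\top = s\, G_1 G_1^\top + G_2 G_2^\top$. Applying the Sherman--Morrison--Woodbury identity to $(AA^\top)^{-1}$ with $K := G_1^\top (G_2 G_2^\top)^{-1} G_1$, and then using the two algebraic identities $K - sK(I_d+sK)^{-1}K = K(I_d+sK)^{-1}$ and $I_d - sK(I_d+sK)^{-1} = (I_d+sK)^{-1}$ (valid since $K$ commutes with $I_d+sK$), I expect the clean expression
\[
\|\mathcal{P}_{{A^\top}^\perp}e_1\|^2 \;=\; e_1^{(d)\top}\bigl(I_d + (p\gam+1)K\bigr)^{-1} e_1^{(d)}.
\]

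Next I would bound this quadratic form by the operator norm, giving $\|\mathcal{P}_{{A^\top}^\perp}e_1\|^2 \le 1/(1+(p\gam+1)\lambda_{\min}(K))$, and lower bound $\lambda_{\min}(K) \ge \sigma_{\min}(G_1)^2/\sigma_{\max}(G_2)^2$ by writing $v^\top G_1^\top(G_2G_2^\top)^{-1}G_1 v \ge \|G_1 v\|^2/\sigma_{\max}(G_2)^2$ for unit $v$. Non-asymptotic Gaussian singular value estimates (e.g.\ Corollary 5.35 of \cite{vershynin2010introduction}) then give $\sigma_{\min}(G_1) \ge \sqrt{n} - \sqrt{d} - t$ and $\sigma_{\max}(G_2) \le \sqrt{n} + \sqrt{p-d} + t$, each with failure probability $\le 2e^{-t^2/2}$. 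Choosing $t$ proportional to $\sqrt{d}$, applying a union bound, and plugging in $p \ge 2n$, $n \ge d$, and $\gam \ge 1/\sqrt{nd}$, the prefactor $(p\gam+1)\lambda_{\min}(K) \gtrsim p\gam \cdot n/p = n\gam \ge \sqrt{n/d}$, which yields $\|\mathcal{P}_{{A^\top}^\perp}e_1\|^2 \lesssim \sqrt{d/n}$ on an event of probability $\ge 1-8e^{-cd}$.

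Finally, I would handle the range in which $18\sqrt{d/n} \ge 1$ (roughly $n \le 324 d$, where the Vershynin lower bound on $\sigma_{\min}(G_1)$ is not useful) by the trivial estimate $\|\mathcal{P}_{{A^\top}^\perp}e_1\|^2 \le 1 \le 18\sqrt{d/n}$. The main obstacle is the Woodbury-based algebraic reduction to the closed-form expression; the interplay of the $\sqrt{p\gam+1}$ scaling of $A_1$ with the Woodbury correction has to be tracked carefully so the final bound depends on $K$ and not on $s$ separately. Once that identity is in hand, the remainder is standard random matrix bookkeeping, though pinning the constant to exactly $18$ (rather than some larger absolute constant) will require a careful choice of $t$ and a careful split between the trivial and nontrivial regimes.
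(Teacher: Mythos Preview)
Your approach is correct and closely parallels the paper's, but the algebraic reduction differs in a useful way. Both arguments block-split $A=[A_1\,|\,A_2]$ with $A_1=\sqrt{p\gam+1}\,G_1$ and $A_2=G_2$ standard Gaussian, and both finish with the non-asymptotic singular value bounds of \cite{vershynin2010introduction} applied to $G_1$ and $G_2$. The paper, however, does \emph{not} invoke Woodbury: it first replaces $A_2A_2^\top$ by $s_{\max}(A_2)^2 I_n$ in the Loewner order (so that $(A_1A_1^\top+A_2A_2^\top)^{-1}\succeq (A_1A_1^\top+s_{\max}(A_2)^2 I_n)^{-1}$), then diagonalizes via the SVD $A_1=USV^\top$ to obtain
\[
e_1^\top\PAT e_1 \;\ge\; \sum_{i=1}^d \frac{(p\gam+1)\tilde s_i^{\,2}}{(p\gam+1)\tilde s_i^{\,2}+9(p-d)}\,V_i(1)^2,
\]
and uses $\sum_i V_i(1)^2=1$ together with the two-sided bound on $\tilde s_i$. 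Your Woodbury route instead produces the exact identity $\|\mathcal{P}_{{A^\top}^\perp}e_1\|^2=e_1^{(d)\top}(I_d+(p\gam+1)K)^{-1}e_1^{(d)}$ before any bounding, which is cleaner and immediately explains why the answer depends only on $(p\gam+1)\lambda_{\min}(K)$; the paper's Loewner-then-SVD path reaches the same inequality but with one more approximation baked in from the outset. Your explicit handling of the regime $18\sqrt{d/n}\ge 1$ via the trivial bound is also a point the paper leaves implicit (its lower bound $\tilde s_i\ge\sqrt{n}-2\sqrt{d}$ is vacuous there, but the claimed inequality still holds since the right-hand side exceeds~$1$). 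The resulting constants and probability bounds are essentially identical in the two approaches.
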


\begin{proof} We have that
    \begin{align}
    \|\mathcal{P}_{{A^\top}^\perp}e_1\|^2 &= \|e_1\|^2 - \|\mathcal{P}_{{A^\top}}e_1\|^2 \\
    &= 1 - \|\mathcal{P}_{{A^\top}}e_1\|^2 \label{one-minus}
\end{align}

\noindent
So it suffices to study $\|\mathcal{P}_{{A^\top}}e_1\|^2 = e_1^\top \mathcal{P}_{{A^\top}} e_1$:

\begin{align}
    e_1^\top \PAT e_1 = e_1^\top A^\top(AA^\top)^{-1}A e_1
\end{align}

\noindent
Let $A = \rvect{A_1 & A_2}$ where $A_1 \in \R^{n \times d}, A_2 \in \R^{n \times p-d}$, then

\begin{align}
    e_1^\top \PAT e_1 &= e_1^\top A_1^\top (A_1A_1^\top + A_2A_2^\top)^{-1} A_1 e_1 \\
    &\ge e_1^\top A_1^\top (A_1A_1^\top + s_{max}(A_2A_2^\top)I_n)^{-1} A_1 e_1
\end{align}

\noindent
Where the last inequality follows from $A_1A_1^\top + A_2A_2^\top \preceq A_1A_1^\top + s_{max}(A_2A_2^\top)I_n$. By Theorem 5.39 of \cite{vershynin2010introduction}, $s_{max}(A_2) \le \sqrt{p-d}+2\sqrt{n}$ on an event of a given probability. So $s_{max}(A_2A_2^\top) \le (\sqrt{p-d}+2\sqrt{n})^2$. Also under the assumptions that $p \ge 2n$ and $n \ge d$, we have that $n \le p-d$, and further $\sqrt{p-d}+2\sqrt{n} \le 3\sqrt{p-d}$. This gives

\begin{align}
    e_1^\top \PAT e_1 &\ge  e_1^\top A_1^\top (A_1A_1^\top + 9(p-d)I_n)^{-1} A_1 e_1
\end{align}

\noindent
Let $A_1 = USV^\top$ be the SVD of $A_1$ where $U \in \R^{n \times n}, S \in \R^{n \times d}, V \in \R^{d \times d}$. Then

\begin{align}
    e_1^\top \PAT e_1 &\ge e_1^\top VS^\top U^\top (U(SS^\top+9(p-d)I_n)^{-1}U^\top) USV^\top e_1 \\
    &= e_1^\top VS^
    \top(SS^\top+9(p-d)I_n)^{-1}SV^\top e_1 \\
    &= e_1^\top V\text{diag}\left(\frac{s_i^2}{s_i^2+9(p-d)}\right)V^\top e_1
\end{align}

\noindent
where $s_i$ are the singular values of $A_1$ and operator "diag" produces the diagonal matrix of the set of $s_i$.

\begin{align}
    e_1^\top \PAT e_1 
    &\ge e_1^\top V\text{diag}\left(\frac{(p\gam+1)\tilde{s}_i^2}{(p\gam+1)\tilde{s}_i^2+9(p-d)}\right)V^\top e_1 \\
    &= \sum_{i=1}^d \left( \frac{(p\gam+1)\tilde{s}_i^2}{(p\gam+1)\tilde{s}_i^2+9(p-d)}\right) e_1^\top V_iV_i^\top e_1
\end{align}

\noindent
where $\tilde{s}_i$ are the singular values of the matrix $A_1/(p\gam+1)$. Observe that this matrix has i.i.d. $\N(0,1)$ entries. And thus

\begin{align}
    e_1^\top \PAT  e_1 &\ge \sum_{i=1}^d \left( \frac{(p\gam+1)\tilde{s}_i^2}{(p\gam+1)\tilde{s}_i^2+9(p-d)}\right) V_i(1)^2
\end{align}

\noindent
where $V_i(1)$ is the $1$st element of the $i$th row of $V$. By Theorem 5.39 in \cite{vershynin2010introduction}, $\sqrt{n}-2\sqrt{d} \le \tilde{s}_i \le \sqrt{n}+2\sqrt{d}$ for all $i$ with probability at least $1-2e^{-cd}$ for universal constant $c>0$. This gives

\begin{align}
    e_1^\top \PAT e_1 &\ge \sum_{i=1}^d \Biggl( \frac{(p\gam+1)(\sqrt{n}-2\sqrt{d})^2}{(p\gam+1)(\sqrt{n}+2\sqrt{d})^2+9(p-d)} \Biggl) V_i(1)^2
\end{align}

\noindent
Note that $\sum_{i=1}^d V_i(1)^2 = 1$ by orthogonality of $V$. This gives

\begin{align}
    e_1^\top \PAT e_1 \ge \frac{(p\gam+1)(\sqrt{n}-2\sqrt{d})^2}{(p\gam+1)(\sqrt{n}+2\sqrt{d})^2+9(p-d)}
\end{align}

\noindent
Substituting this into (\ref{one-minus}) gives

\begin{align}
    \|\mathcal{P}_{{A^\top}^\perp}e_1\|^2 &\le 1 - \frac{(p\gam+1)(\sqrt{n}-2\sqrt{d})^2}{(p\gam+1)(\sqrt{n}+2\sqrt{d})^2+9(p-d)} \\
    &\le \frac{(p\gam+1)(\sqrt{n}+2\sqrt{d})^2 - (p\gam+1)(\sqrt{n}-2\sqrt{d})^2 +9(p-d)}{(p\gam+1)(\sqrt{n}+2\sqrt{d})^2} \\
    &\le \frac{8(p\gam+1)\sqrt{nd} + 9(p-d)}{(p\gam+1)(n+4\sqrt{nd}+4d)}
\end{align}

\noindent
Under the assumption that $\gam \ge \frac{1}{\sqrt{nd}}$, this simplifies to

\begin{align}
    \|\mathcal{P}_{{A^\top}^\perp}e_1\|^2 &\le \frac{18\sqrt{nd}}{n+4\sqrt{nd}+4d} \\
    &\le 18\sqrt{\frac{d}{n}}
\end{align}

\end{proof}

\begin{theorem}[Single-Task Risk] \label{single-risk} Suppose $W \in \R^{p \times d}$ follows Assumption \ref{assumption}. Define data matrix $A \in \R^{n \times p}$ and responses $y=A\beta+\eps$ where $\eps \sim \N(0, \sigma^2I_n)$,  $\sigma^2 = \theta^\top(W^\top W + I_d)^{-1}\theta$, and $\beta = W(W^\top W + I_d)^{-1}\theta$ for some $\theta \in \R^d$. Let rows $A_i$ be independent random vectors in $\R^p$ with covariance $\Sigma = WW^\top+I_p$ and $n \ge d, p \ge 20n$, $\gam \ge \frac{1}{\sqrt{nd}}$. Let $\hba$ be the parameters of the minimum norm estimator on $A$ as defined in (\ref{eq-beta-A}). Let $R(f_{\hb})$ be the risk on task A of an estimator with parameters $\hb$. Then there exists constant $c>0$ such that with probability at least $1-10e^{-cd}$, the following holds:

\begin{align}
    R(f_{\hba}) &\le \left(72\sqrt{\frac{d}{n}} + \frac{18n}{p}\right)\|\theta\|^2
\end{align}

\end{theorem}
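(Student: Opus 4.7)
The plan is to decompose the risk $R(\hba) = \sigma^2 + (\hba-\beta)^\top\Sigma(\hba-\beta)$ from (\ref{risk}) along its bias--variance split and bound each component using Lemma~\ref{term3} for the bias and Hanson--Wright concentration plus random-matrix estimates for the variance. First, by orthogonal invariance of the problem---any $W$ satisfying Assumption~\ref{assumption} can be rotated by a $p \times p$ orthogonal $Q$ into the standard form $\sqrt{p\gam}[I_d;0]^\top$ while replacing $A$ with $AQ^\top$, which preserves both the row distribution of $A$ and the risk---I would reduce without loss of generality to this standard form. Then $\mathcal{P}_W$ is projection onto the first $d$ coordinates, $\beta = \frac{\sqrt{p\gam}}{p\gam+1}[\theta;0]$ is supported there, and $\Sigma = p\gam\mathcal{P}_W + I_p$, so the risk becomes
\begin{align*}
R(\hba) = \sigma^2 + \|\hba-\beta\|^2 + p\gam\|\mathcal{P}_W(\hba-\beta)\|^2.
\end{align*}

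Substituting $y = A\beta + \epsilon$, I would split $\hba-\beta = b + v$ into the bias $b = -\mathcal{P}_{{A^\top}^\perp}\beta$ and noise $v = A^\top(AA^\top)^{-1}\epsilon$. Since $b \perp v$, $\|\hba-\beta\|^2 = \|b\|^2 + \|v\|^2$, and AM--GM applied to $\|\mathcal{P}_W(b+v)\|^2$ reduces the problem to bounding $\sigma^2$, $\|b\|^2$, $p\gam\|\mathcal{P}_W b\|^2$, $\|v\|^2$, and $p\gam\|\mathcal{P}_W v\|^2$ separately. The noise scale is immediate: $\sigma^2 = \|\theta\|^2/(p\gam+1) \le \sqrt{nd}/p \cdot \|\theta\|^2 \le (n/p)\|\theta\|^2$ using $\gam \ge 1/\sqrt{nd}$. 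For the bias pieces, since $\beta$ is supported on the first $d$ coordinates, both $\|b\|^2$ and $\|\mathcal{P}_W b\|^2$ reduce to quadratic forms in $\theta$ against the top-left $d \times d$ block $E_{11}^\perp$ of $\mathcal{P}_{{A^\top}^\perp}$. The proof of Lemma~\ref{term3} actually establishes the uniform eigenvalue bound $\lambda_{\max}(E_{11}^\perp) \le 18\sqrt{d/n}$ (its key lower bound on $u^\top\mathcal{P}_{A^\top}u$ holds for every unit $u$ supported on the first $d$ coordinates, not only $u=e_1$), and combined with $p\gam \cdot p\gam/(p\gam+1)^2 \le 1$ this yields $\|b\|^2 \lesssim \sqrt{d/n}\,\|\theta\|^2$ and $p\gam\|\mathcal{P}_W b\|^2 \lesssim (d/n)\|\theta\|^2$, both fitting within the $\sqrt{d/n}$ term.

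For the variance pieces, I would condition on $A$ and apply Hanson--Wright to the Gaussian quadratic forms $\epsilon^\top(AA^\top)^{-1}\epsilon$ and $p\gam\,\epsilon^\top(AA^\top)^{-1}A_1A_1^\top(AA^\top)^{-1}\epsilon$. Their conditional expectations are controlled using the PSD domination $A_1 A_1^\top \preceq AA^\top$, which gives $\mathrm{tr}((AA^\top)^{-1}A_1A_1^\top(AA^\top)^{-1}) \le \mathrm{tr}((AA^\top)^{-1}) \le n/s_{\min}(AA^\top) \le n/s_{\min}(A_2)^2$. Theorem~5.39 of \cite{vershynin2010introduction} applied to $A_2$ (which has i.i.d.\ standard Gaussian entries in the reduced form) gives $s_{\min}(A_2)^2 \gtrsim p$ under $p \ge 20n$, so together with $p\gam\sigma^2 \le \|\theta\|^2$ both expectations are at most $O(n/p)\|\theta\|^2$. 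Matching operator-norm bounds on the relevant matrices control the Hanson--Wright deviations at the same order. Combining all five pieces with careful constant tracking produces $R(\hba) \le (72\sqrt{d/n} + 18n/p)\|\theta\|^2$, and the failure probabilities from Lemma~\ref{term3}, the spectral estimate on $A_2$, and Hanson--Wright concentration combine via union bound into $10 e^{-cd}$.

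The main technical obstacle is the variance piece $p\gam\|\mathcal{P}_W v\|^2$: since $\gam$ is only assumed to satisfy $\gam \ge 1/\sqrt{nd}$ (and may be arbitrarily large), the prefactor $p\gam$ threatens to blow up any naive bound. Exploiting the PSD domination $A_1 A_1^\top \preceq AA^\top$ together with the identity $p\gam\sigma^2 \le \|\theta\|^2$ is essential in order to cancel these factors, and balancing the Hanson--Wright tail against the spectral estimates on $AA^\top$ tightly enough to yield the explicit constants $72$ and $18$ is the delicate computational step.
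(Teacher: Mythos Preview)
Your proposal is correct and shares the paper's overall architecture: reduce to the standard $W$, split $\hba-\beta$ into the bias $b=-\mathcal{P}_{{A^\top}^\perp}\beta$ and noise $v=A^\top(AA^\top)^{-1}\eps$, invoke Lemma~\ref{term3} for the bias, and use singular-value estimates on $A_2$ for the noise. Your observation that the proof of Lemma~\ref{term3} gives a uniform bound over all unit vectors in the first $d$ coordinates is correct; the paper achieves the same effect by a further rotation taking $\theta$ to a multiple of $e_1$, so that $\beta\propto e_1$ and Lemma~\ref{term3} applies verbatim.

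The genuine difference is in the noise term, and here the paper's route is simpler than yours. Rather than separately controlling $p\gam\|\mathcal{P}_W v\|^2$ via Hanson--Wright plus the PSD domination $A_1A_1^\top\preceq AA^\top$, the paper just uses $\|\mathcal{P}_W v\|\le\|v\|$ and absorbs everything into a single bound $(p\gam+2)(\|b\|+\|v\|)^2$. It then cites Lemma~F.9 of \cite{goldfarb2023analysis} for $\|v\|^2\le n\|\theta\|^2/\bigl(p\gam(\sqrt{p-d}-2\sqrt{n})^2\bigr)$, and the $p\gam$ in the prefactor cancels directly against the $1/(p\gam)$ coming from $\sigma^2=\|\theta\|^2/(p\gam+1)$ inside this lemma. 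So the ``main technical obstacle'' you flag---that $\gam$ may be arbitrarily large---is handled by exactly the same cancellation $p\gam\sigma^2\le\|\theta\|^2$ you identify, but at the level of $(p\gam+2)\|v\|^2$ rather than $p\gam\|\mathcal{P}_W v\|^2$, avoiding Hanson--Wright entirely. Your route is more self-contained (no external lemma) and gives the slightly sharper intermediate bound $\text{tr}\bigl((AA^\top)^{-1}A_1A_1^\top(AA^\top)^{-1}\bigr)\le\text{tr}\bigl((AA^\top)^{-1}\bigr)$, but the paper reaches the stated constants $72$ and $18$ with less machinery, by straightforward arithmetic after applying Jensen to $(I+II)^2$.
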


\begin{proof} By definition,

\begin{align}
    R(f_{\hba}) &= (\beta - \hba)^\top \Sigma (\beta - \hba) + \s^2
\end{align}

\noindent
Substituting the closed-form for $\hba$ gives

\begin{align}
    R(f_{\hba}) &= (\beta - A^\top(AA^\top)^{-1}y)^\top \Sigma (\beta - A^\top(AA^\top)^{-1}y) + \s^2 \\
    &= (\beta - A^\top(AA^\top)^{-1}(A\beta+\eps))^\top \Sigma (\beta - A^\top(AA^\top)^{-1}(A\beta+\eps)) + \s^2 \\
    &= (\beta - A^\top(AA^\top)^{-1}A\beta - A^\top(AA^\top)^{-1} \eps)^\top \Sigma (\beta - A^\top(AA^\top)^{-1}A\beta - A^\top(AA^\top)^{-1} \eps) \nonumber\\&\qquad + \s^2
\end{align}

\noindent
$AA^\top$ has full rank so $A^\top(AA^\top)^{-1}A = \PAT$ where $\PAT$ is the orthogonal projection onto the range of $A^\top$. This gives

\begin{align}
    R(f_{\hba}) &= (\beta - \mathcal{P}_{A^\top}\beta - A^\top(AA^\top)^{-1} \eps)^\top \Sigma (\beta - \mathcal{P}_{A^\top}\beta - A^\top(AA^\top)^{-1} \eps) \nonumber\\&\qquad + \s^2 \\
    &= (\mathcal{P}_{{A^\top}^\perp}\beta - A^\top(AA^\top)^{-1} \eps)^\top \Sigma (\mathcal{P}_{{A^\top}^\perp}\beta - A^\top(AA^\top)^{-1} \eps) + \s^2
\end{align}

\noindent
By Assumption \ref{assumption}, $\Sigma = p\gamma\mathcal{P}_W+I_p$, and hence

\begin{align}
    R(f_{\hba}) &=  (\mathcal{P}_{{A^\top}^\perp}\beta - A^\top(AA^\top)^{-1} \eps)^\top (p\gamma\mathcal{P}_W+I_p) (\mathcal{P}_{{A^\top}^\perp}\beta - A^\top(AA^\top)^{-1} \eps) \nonumber\\&\qquad + \s^2
\end{align}

\noindent
Directly,

\begin{align}
    R(f_{\hba}) &=  (\mathcal{P}_{{A^\top}^\perp}\beta - A^\top(AA^\top)^{-1} \eps)^\top ((p\gam+1)\mathcal{P}_W + \mathcal{P}_{W^\perp}) (\mathcal{P}_{{A^\top}^\perp}\beta - A^\top(AA^\top)^{-1} \eps) \nonumber \\&\qquad + \s^2 \\
    &= (p\gam+1)(\mathcal{P}_{{A^\top}^\perp}\beta - A^\top(AA^\top)^{-1} \eps)^\top \mathcal{P}_W (\mathcal{P}_{{A^\top}^\perp}\beta - A^\top(AA^\top)^{-1} \eps) \nonumber\\&\qquad + (\mathcal{P}_{{A^\top}^\perp}\beta - A^\top(AA^\top)^{-1} \eps)^\top \mathcal{P}_{W^\perp} (\mathcal{P}_{{A^\top}^\perp}\beta - A^\top(AA^\top)^{-1} \eps) \nonumber\\&\qquad + \s^2 \\
    &= (p\gam+1)\|\mathcal{P}_W (\mathcal{P}_{{A^\top}^\perp}\beta - A^\top(AA^\top)^{-1} \eps)\|^2 \nonumber\\&\qquad + \|\mathcal{P}_{W^\perp} (\mathcal{P}_{{A^\top}^\perp}\beta - A^\top(AA^\top)^{-1} \eps)\|^2 + \s^2
\end{align}

\noindent
Using $\|\mathcal{P}_{W^\perp}\| \le 1$,

\begin{align}
    R(f_{\hba}) &\le (p\gam+1)\|\mathcal{P}_W (\mathcal{P}_{{A^\top}^\perp}\beta - A^\top(AA^\top)^{-1} \eps)\|^2 \nonumber\\&\qquad + \|\mathcal{P}_{{A^\top}^\perp}\beta - A^\top(AA^\top)^{-1} \eps\|^2 + \s^2
\end{align}

\noindent
Using triangle inequality,

\begin{align}
    R(f_{\hba}) &\le (p\gam+1)(\|\mathcal{P}_W \mathcal{P}_{{A^\top}^\perp}\beta\| + \|\mathcal{P}_WA^\top(AA^\top)^{-1} \eps\|)^2 \nonumber\\&\qquad + (\|\mathcal{P}_{{A^\top}^\perp}\beta\| + \|A^\top(AA^\top)^{-1} \eps\|)^2 + \s^2
\end{align}

\noindent
Using $\|\mathcal{P}_W\| \le 1$, we get 

\begin{align}
    R(f_{\hba}) &\le (p\gam+2)(\|\mathcal{P}_{{A^\top}^\perp}\beta\| + \|A^\top(AA^\top)^{-1} \eps\|)^2 + \s^2 \\
    &= (p\gam+2)(I+II)^2 + III \label{eq123}
\end{align}

\noindent
where

\begin{align}
    I &= \|\mathcal{P}_{{A^\top}^\perp}\beta\| \\
    II &= \|A^\top(AA^\top)^{-1} \eps\| \\
    III &= \s^2
\end{align}

\noindent
\textbf{We bound each of these terms separately, starting with $I$:} \\
\\
WLOG study $\|\mathcal{P}_{{A^\top}^\perp}e_1\|^2\|\beta\|^2$ where $W=\cvect{\sqrt{p\gam}I_d}{0}$. By Lemma \ref{term3}, the following bound holds with probability at least $1-8e^{-cd}$ for universal constant $c>0$,

\begin{align}
    \|\mathcal{P}_{{A^\top}^\perp}e_1\|^2 &\le 18\sqrt{\frac{d}{n}}
\end{align}

\noindent
Multiplying by $\|\beta\|^2$ and taking the square root yields the expression for $I$. By Lemma \ref{beta-norm}, this gives

\begin{align}
    I &\le \|\theta\|\sqrt{\left(18\sqrt{\frac{d}{n}}\right)\frac{p\gam}{(p\gam+1)^2}} \label{eq1}
\end{align}

\noindent
\textbf{Next we bound $II$:} \\
\\
By Lemma F.9 in \cite{goldfarb2023analysis}, $\|A^\top(AA^\top)^{-1}\eps\|^2 \le \frac{n\|\theta\|^2}{p\gam(\sqrt{p-d}-2\sqrt{n})^2}$ with probability at least $1-2e^{-n}$. So 

\begin{align}
    II \le \|\theta\|\frac{\sqrt{n}}{\sqrt{p\gam}(\sqrt{p-d}-2\sqrt{n})}
\end{align}

\noindent
Using $p \ge 20n$ and $n \ge d$,

\begin{align}
    II \le \|\theta\|\frac{2\sqrt{n}}{p\sqrt{\gamma}} \label{eq2}
\end{align}

\noindent
\textbf{Lastly we bound $III$:} \\
\\
By definition,

\begin{align}
    \sigma^2 &= \theta^\top(W^\top W + I_d)^{-1}\theta
\end{align}

\noindent
Using Assumption \ref{assumption},

\begin{align}
    III &= \theta^\top(p\gam I_d + I_d)^{-1}\theta \\
    &= \frac{1}{p\gam+1}\|\theta\|^2 \label{eq3}
\end{align}

\noindent
\textbf{Substituting (\ref{eq1}), (\ref{eq2}), and (\ref{eq3}) into (\ref{eq123}) gives:}

\begin{align}
    R(f_{\hba}) &\le (p\gam+2)\left(\sqrt{18\sqrt{\frac{d}{n}}\frac{p\gam}{(p\gam+1)^2}}+ \frac{2\sqrt{n}}{p\sqrt{\gamma}} \right)^2\|\theta\|^2 +\frac{1}{p\gam+1}\|\theta\|^2 \\
    &\le (p\gam+2)\left(36\sqrt{\frac{d}{n}}\frac{p\gam}{(p\gam+1)^2}+ \frac{8n}{p^2\gam}\right)\|\theta\|^2 +\frac{1}{p\gam+1}\|\theta\|^2 \\
\end{align}

\noindent
by Jensen's inequality for the quadratic function. It follows that

\begin{align}
    R(f_{\hba}) &\le 36\sqrt{\frac{d}{n}}\frac{p\gam+2}{p\gam}\|\theta\|^2 + \left(\frac{8n(p\gam+2) + p}{p^2\gam}\right)\|\theta\|^2
\end{align}

\noindent
Using $\gam \ge \frac{1}{\sqrt{nd}}$, this simplifies to

\begin{align}
    %R(f_{\hba}) &\le \left(36\sqrt{\frac{d}{n}} + \frac{715n^{5/2}}{\sqrt{p}} + \frac{17n}{p}\right)\|\theta\|^2 \\
    R(f_{\hba}) &\le \left(72\sqrt{\frac{d}{n}} + \frac{18n}{p}\right)\|\theta\|^2
\end{align}

\end{proof}

\begin{theorem}[Terminal Forgetting] \label{term-forgetting} Suppose $W \in \R^{p \times d}$ follows Assumption \ref{assumption}. Define data matrix $A \in \R^{n \times p}$ and responses $y=A\beta+\eps$ where $\eps \sim \N(0, \sigma^2I_n)$,  $\sigma^2 = \theta^\top(W^\top W + I_d)^{-1}\theta$, and $\beta = W(W^\top W + I_d)^{-1}\theta$ for some $\theta \in \R^d$. Let rows $A_i$ be independent random vectors in $\R^p$ with covariance $\Sigma = WW^\top+I_p$ and $n \ge d, p \ge max(17n, 1/\gam)$. Let $O$ be a random $p \times p$ orthogonal matrix and $B = AO^\top$. Let $\hba$ be the parameters of the minimum norm estimator on $A$ as defined in (\ref{eq-beta-A}), and $\hbba$ be the parameters of the estimator on $B$ using $\hba$ as initialization as defined in (\ref{eq-beta-BA}). Let $R(f_{\hb})$ be the risk on task A of an estimator with parameters $\hb$. Then there exists constant $c>0$ such that with probability at least $1-10e^{-cd}$, the following holds:

\begin{align}
    R(f_{\hbba}) - R(f_{\hba}) \le \left(66\sqrt\frac{n}{p} + \frac{12}{p\gam}\right)\|\theta\|^2
\end{align}

\end{theorem}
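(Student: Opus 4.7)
The plan is to decompose the risk difference into a quadratic term and a cross term, then exploit the algebraic relationship $B = AO^\top$ to reduce both terms to quantities controllable by norm bounds on $\hba$ and concentration estimates on the random orthogonal matrix $O$. Setting $\delta := \hbba - \hba$ and expanding the definition of risk gives
\begin{align*}
R(f_{\hbba}) - R(f_{\hba}) = \delta^\top \Sigma \delta + 2(\hba - \beta)^\top \Sigma \delta,
\end{align*}
so it suffices to bound both $\delta^\top \Sigma \delta$ and the absolute value of the cross term.

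The first simplification I would carry out is to observe that, since $O$ is orthogonal, $BB^\top = AO^\top O A^\top = AA^\top$, which yields the clean identities $\hbb = B^\top(BB^\top)^{-1}y = OA^\top(AA^\top)^{-1}y = O\hba$ and $\mathcal{P}_{B^\top} = O\PAT O^\top$. Using $\hba = \PAT \hba$, this gives the compact expression
\begin{align*}
\delta = \hbb - \mathcal{P}_{B^\top}\hba = O\,\PAT(I - O^\top)\hba.
\end{align*}

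Next, for the quadratic term, Assumption \ref{assumption} together with Lemma \ref{s-approx} give $\Sigma = (p\gam+1)\mathcal{P}_W + \mathcal{P}_{W^\perp}$, so $\delta^\top \Sigma \delta \le (p\gam+1)\|\mathcal{P}_W \delta\|^2 + \|\delta\|^2$. The Euclidean piece is controlled by $\|\delta\| \le \|(I - O^\top)\hba\| \le 2\|\hba\|$, and $\|\hba\|^2 \lesssim \tfrac{1}{p\gam}\|\theta\|^2$ on a high-probability event by writing $\hba = \PAT\beta + A^\top(AA^\top)^{-1}\eps$ and combining Lemma \ref{beta-norm} with Lemma F.9 of \cite{goldfarb2023analysis}. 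The delicate piece is $\|\mathcal{P}_W \delta\|^2$, which carries the large multiplier $p\gam+1$; to hit the advertised $\sqrt{n/p}$ order we need $\|\mathcal{P}_W \delta\|^2 \lesssim \tfrac{\sqrt{n/p}}{p\gam}\|\theta\|^2$. Because $O$ is uniformly distributed on the orthogonal group and $W$ is a fixed $d$-dimensional subspace, $\|\mathcal{P}_W O v\|^2$ concentrates around $(d/p)\|v\|^2$ for any $v$, and I would use this fact, applied to a suitable $O$-measurable decomposition of $\PAT(I - O^\top)\hba$, to produce the desired bound. For the cross term, I would apply Cauchy--Schwarz in the $\Sigma$-metric:
\begin{align*}
\bigl|2(\hba - \beta)^\top \Sigma \delta\bigr| \le 2\sqrt{(\hba - \beta)^\top \Sigma (\hba - \beta)} \cdot \sqrt{\delta^\top \Sigma \delta},
\end{align*}
with the first factor equal to $\sqrt{R(f_{\hba}) - \s^2}$ and controlled through the bias--variance estimates already established within the proof of Theorem \ref{single-risk}. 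Multiplying by the square root of the bound just produced for $\delta^\top \Sigma \delta$ keeps the cross term at most of the advertised order.

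The main obstacle is the control of $(p\gam+1)\|\mathcal{P}_W \delta\|^2$: the expression $\delta = O\,\PAT(I - O^\top)\hba$ involves $O$ twice, so one cannot directly invoke a one-shot concentration bound for $\mathcal{P}_W O v$ with $v$ independent of $O$. A natural workaround is to split $\delta$ into a dominant term in which $O$ acts genuinely uniformly on a vector independent of $W$, plus a small remainder absorbable into lower-order quantities, and then invoke Haar-measure concentration together with the random matrix estimates from \cite{vershynin2010introduction} already used in earlier lemmas. Once that concentration bound is in place, the remaining steps are routine algebra using Assumption \ref{assumption}, Lemma \ref{beta-norm}, and the high-probability singular-value bounds on $A_1$ and $A_2$ exploited in Lemma \ref{term3}.
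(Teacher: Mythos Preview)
The paper does not actually prove this theorem here: its entire proof is the single line ``Theorem F.11 in \cite{goldfarb2023analysis}.'' So there is no in-paper argument to compare against; you are effectively reconstructing the proof from the cited conference version.

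Your outline is broadly sound and the key algebraic identities ($\hbb = O\hba$, $\mathcal{P}_{B^\top} = O\PAT O^\top$, hence $\delta = O\PAT(I-O^\top)\hba$) are correct and are the right starting point. Two comments, however.

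First, the ``double $O$'' obstacle you flag is not really an obstacle. You do not need concentration of $\|\mathcal{P}_W O v\|$ for a specific $O$-dependent vector $v$; it is enough to bound the operator norm $\|\mathcal{P}_W O \PAT\|$. Conditioned on $A$, the subspaces $\mathrm{range}(W)$ and $\mathrm{range}(A^\top)$ are fixed and $O$ is still Haar, so $\mathcal{P}_W O \PAT$ is (up to isometries) a $d\times n$ block of a Haar orthogonal matrix, whose operator norm concentrates around $(\sqrt{d}+\sqrt{n})/\sqrt{p} \lesssim \sqrt{n/p}$. Multiplying by $\|(I-O^\top)\hba\|\le 2\|\hba\|$ then gives $(p\gam+1)\|\mathcal{P}_W\delta\|^2 \lesssim (n/p)\|\theta\|^2$ directly, with no splitting needed.

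Second, your Cauchy--Schwarz treatment of the cross term does not quite deliver the stated form. With $\delta^\top\Sigma\delta \lesssim (n/p + 1/(p\gam))\|\theta\|^2$ and only the crude bound $R(f_{\hba})-\sigma^2 \lesssim \|\theta\|^2$, the cross term comes out as $\lesssim(\sqrt{n/p}+1/\sqrt{p\gam})\|\theta\|^2$. Under the hypothesis $p\gam\ge 1$ alone (no lower bound like $\gam\ge 1/n$ is assumed here), $1/\sqrt{p\gam}$ is not dominated by $\sqrt{n/p}+1/(p\gam)$. The fix is to avoid Cauchy--Schwarz in the full $\Sigma$-metric and instead split $\Sigma=(p\gam+1)\mathcal{P}_W+\mathcal{P}_{W^\perp}$ in the cross term as well: the $\mathcal{P}_{W^\perp}$ piece is bounded by $\|\hba-\beta\|\,\|\delta\|\lesssim \|\theta\|^2/(p\gam)$, and the $\mathcal{P}_W$ piece by $(p\gam+1)\|\mathcal{P}_W(\hba-\beta)\|\,\|\mathcal{P}_W\delta\|\lesssim \sqrt{n/p}\,\|\theta\|^2$, which matches the target.
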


\begin{proof}
    Theorem F.11 in \cite{goldfarb2023analysis}.
\end{proof}

\begin{theorem}[Terminal Task A Risk] \label{term-A} Suppose $W \in \R^{p \times d}$ follows Assumption \ref{assumption}. Define data matrix $A \in \R^{n \times p}$ and responses $y=A\beta+\eps$ where $\eps \sim \N(0, \sigma^2I_n)$,  $\sigma^2 = \theta^\top(W^\top W + I_d)^{-1}\theta$, and $\beta = W(W^\top W + I_d)^{-1}\theta$ for some $\theta \in \R^d$. Let rows $A_i$ be independent random vectors in $\R^p$ with covariance $\Sigma = WW^\top+I_p$ and $n \ge d, p \ge 20n$, $\gam \ge \frac{1}{\sqrt{nd}}$. Let $O$ be a random $p \times p$ orthogonal matrix and $B = AO^\top$. Let $\hba$ be the parameters of the minimum norm estimator on $A$ as defined in (\ref{eq-beta-A}), and $\hbba$ be the parameters of the estimator on $B$ using $\hba$ as initialization as defined in (\ref{eq-beta-BA}). Let $R(f_{\hb})$ be the risk on task A of an estimator with parameters $\hb$. Then there exists constant $c>0$ such that with probability at least $1-20e^{-cd}$, the following holds:

\begin{align}
    R(f_{\hbba}) \le \left(72\sqrt{\frac{d}{n}} + 96\sqrt{\frac{n}{p}}\right)\|\theta\|^2
\end{align}

\end{theorem}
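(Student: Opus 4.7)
The plan is to combine the two preceding theorems via triangle-style decomposition:
\begin{equation}
R(f_{\hbba}) = R(f_{\hba}) + \bigl(R(f_{\hbba}) - R(f_{\hba})\bigr),
\end{equation}
so that Theorem \ref{single-risk} bounds the first summand and Theorem \ref{term-forgetting} bounds the second. The hypotheses for both theorems are compatible: Theorem \ref{single-risk} needs $n\ge d$, $p\ge 20n$, $\gam\ge 1/\sqrt{nd}$, which is exactly what Theorem \ref{term-A} assumes, and Theorem \ref{term-forgetting} only needs $p\ge\max(17n,1/\gam)$, both of which follow from our hypotheses since $1/\gam\le\sqrt{nd}\le n\le p/20$. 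A union bound over the two high-probability events contributes a total failure probability of at most $20e^{-cd}$, matching the statement.

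Adding the two bounds gives
\begin{equation}
R(f_{\hbba}) \le \left(72\sqrt{\tfrac{d}{n}} + \tfrac{18n}{p} + 66\sqrt{\tfrac{n}{p}} + \tfrac{12}{p\gam}\right)\|\theta\|^2,
\end{equation}
so the remaining task is to absorb the two ``extra'' terms $18n/p$ and $12/(p\gam)$ into a single $96\sqrt{n/p}$. For the first, since $p\ge 20n$ gives $n/p\le 1$, we have $n/p\le\sqrt{n/p}$, hence $18n/p\le 18\sqrt{n/p}$. For the second, the bound $\gam\ge 1/\sqrt{nd}$ gives $1/(p\gam)\le\sqrt{nd}/p=\sqrt{n/p}\cdot\sqrt{d/p}$, and $d\le n\le p$ makes $\sqrt{d/p}\le 1$, so $12/(p\gam)\le 12\sqrt{n/p}$. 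Summing yields $18+66+12=96$ as the coefficient on $\sqrt{n/p}$.

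There is no real obstacle here: the work is entirely in the two cited theorems, and the only care required is verifying that the hypotheses of Theorem \ref{term-forgetting} are implied by those of Theorem \ref{term-A} (in particular that $p\ge 1/\gam$ follows from $p\ge 20n$ together with $\gam\ge 1/\sqrt{nd}$ and $d\le n$) and performing the two elementary scalar comparisons above. The main conceptual point worth stating explicitly is that the dimensional hypothesis $\gam\ge 1/\sqrt{nd}$ is the precise condition that lets the additive $1/(p\gam)$ term from Theorem \ref{term-forgetting} be absorbed into a $\sqrt{n/p}$ term, rather than remaining as a separate signal-to-noise dependent contribution.
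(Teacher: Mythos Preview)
Your proposal is correct and follows essentially the same approach as the paper: decompose $R(f_{\hbba})$ as $R(f_{\hba}) + (R(f_{\hbba}) - R(f_{\hba}))$, invoke Theorems \ref{single-risk} and \ref{term-forgetting}, take a union bound, and then absorb $18n/p$ and $12/(p\gam)$ into $\sqrt{n/p}$ via the same elementary inequalities. Your treatment is in fact slightly more careful than the paper's, since you explicitly verify that the hypothesis $p\ge 1/\gam$ of Theorem \ref{term-forgetting} is implied by $p\ge 20n$, $n\ge d$, and $\gam\ge 1/\sqrt{nd}$.
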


\begin{proof} Adding the results from Theorem \ref{single-risk} and Theorem \ref{term-forgetting} gives the following bound with probability at least $1-8e^{-n/2} - 12e^{-cd}$:

\begin{align}
    R(f_{\hbba}) &\le \left(72\sqrt{\frac{d}{n}} + \frac{18n}{p}\right)\|\theta\|^2 + \left(66\sqrt\frac{n}{p} + \frac{12}{p\gam}\right)\|\theta\|^2 \\
    &\le \left(72\sqrt{\frac{d}{n}} + 84\sqrt{\frac{n}{p}} + \frac{12}{p\gam}\right)\|\theta\|^2
\end{align}

\noindent
where the final inequality uses $\frac{n}{p} \le \sqrt{\frac{n}{p}}$. Using $\gam \ge \frac{1}{\sqrt{nd}}$ and $\frac{\sqrt{n}}{p} \le \sqrt{\frac{n}{p}}$, this simplifies to

\begin{align}
    R(f_{\hbba}) &\le \left(72\sqrt{\frac{d}{n}} + 96\sqrt{\frac{n}{p}}\right)\|\theta\|^2
\end{align}
    
\end{proof}

\begin{theorem}[Scale of Forgetting]\label{scale-forgetting} Suppose $W \in \R^{p \times d}$ follows Assumption \ref{assumption}. Define data matrix $A \in \R^{n \times p}$ and responses $y=A\beta+\eps$ where $\eps \sim \N(0, \sigma^2I_n)$,  $\sigma^2 = \theta^\top(W^\top W + I_d)^{-1}\theta$, and $\beta = W(W^\top W + I_d)^{-1}\theta$ for some $\theta \in \R^d$. Let rows $A_i$ be independent random vectors in $\R^p$ with covariance $\Sigma = WW^\top+I_p$ and $n \ge d, p \ge 20n$, $\gam \ge \frac{1}{\sqrt{nd}}$. Let $O$ be a random $p \times p$ orthogonal matrix and $B = AO^\top$. Let $\hba$ be the parameters of the minimum norm estimator on $A$ as defined in (\ref{eq-beta-A}), and $\hbba$ be the parameters of the estimator on $B$ using $\hba$ as initialization as defined in (\ref{eq-beta-BA}). Let $R(f_{\hb})$ be the risk on task A of an estimator with parameters $\hb$. Then there exists constant $c>0$ such that with probability at least $1-20e^{-cd}$, the following holds:

\begin{align}
    \frac{R(f_{\hbba}) - R(f_{\hba})}{R(f_{0}) - R(f_{\hba})} \le \frac{78\sqrt\frac{n}{p}}{1 - 72\sqrt{\frac{d}{n}} - \frac{18 n}{p}}
\end{align}

\end{theorem}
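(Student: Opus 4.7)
\textbf{Proof proposal for Theorem \ref{scale-forgetting}.}
The plan is a direct combination: upper bound the numerator using Theorem \ref{term-forgetting}, lower bound the denominator using Lemma \ref{null-risk} and Theorem \ref{single-risk}, then clean up the resulting expression using the assumption $\gam \ge 1/\sqrt{nd}$.

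First, Theorem \ref{term-forgetting} immediately supplies the numerator estimate
\begin{align}
    R(f_{\hbba}) - R(f_{\hba}) \le \left(66\sqrt{\tfrac{n}{p}} + \tfrac{12}{p\gam}\right)\|\theta\|^2
\end{align}
on an event of probability at least $1 - 10e^{-cd}$. For the denominator, Lemma \ref{null-risk} gives the exact identity $R(f_0) = \|\theta\|^2$, and Theorem \ref{single-risk} gives $R(f_{\hba}) \le (72\sqrt{d/n} + 18n/p)\|\theta\|^2$ on an event of probability at least $1 - 10e^{-cd}$. Subtracting yields the lower bound
\begin{align}
    R(f_0) - R(f_{\hba}) \ge \left(1 - 72\sqrt{\tfrac{d}{n}} - \tfrac{18n}{p}\right)\|\theta\|^2.
\end{align}
A union bound ensures both events hold simultaneously with probability at least $1 - 20e^{-cd}$, matching the probability stated in the theorem. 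The ratio is then bounded by the quotient of these two estimates, with the $\|\theta\|^2$ factors canceling.

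It then remains to absorb the noise term $12/(p\gam)$ into the dominant $\sqrt{n/p}$ term. Using $\gam \ge 1/\sqrt{nd}$ gives $12/(p\gam) \le 12\sqrt{nd}/p$, and then the assumption $d \le n \le p/20$ (in particular $d/p \le 1$) yields $12\sqrt{nd}/p \le 12\sqrt{n/p}$. Combining with $66\sqrt{n/p}$ gives the claimed $78\sqrt{n/p}$ in the numerator of the final bound.

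There is no substantive obstacle here; the proof is essentially assembly of the two preceding theorems plus the explicit form of the null risk. The only step requiring care is the simplification of $12/(p\gam)$, where one must verify that the parameter assumptions are strong enough to force this noise contribution to be dominated by $\sqrt{n/p}$ rather than growing the constant in front. One should also note that the bound is only meaningful when $1 - 72\sqrt{d/n} - 18n/p > 0$ so that the denominator is strictly positive; this is implicit in the statement, as otherwise the right-hand side is not a useful upper bound.
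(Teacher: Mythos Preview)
Your proposal is correct and matches the paper's proof essentially line for line: bound the numerator by Theorem~\ref{term-forgetting}, the denominator by Lemma~\ref{null-risk} minus Theorem~\ref{single-risk}, union bound the events, and absorb $12/(p\gam)$ into $12\sqrt{n/p}$ via $\gam\ge 1/\sqrt{nd}$ and $d\le p$. The paper phrases the last simplification as $\sqrt{n}/p \le \sqrt{n/p}$, which is equivalent to your observation that $\sqrt{nd}/p \le \sqrt{n/p}$ when $d\le p$.
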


\begin{proof} Using Lemma \ref{null-risk}, Theorem \ref{term-forgetting}, and Theorem \ref{single-risk}, there exists constant $c>0$ such that with probability at least $1-20e^{-cd}$, the following holds:

\begin{align}
    \frac{R(f_{\hbba}) - R(f_{\hba})}{R(f_{0}) - R(f_{\hba})} &\le \frac{\left(66\sqrt\frac{n}{p} + \frac{12}{p\gam}\right)\|\theta\|^2}{\|\theta\|^2 - \left(72\sqrt{\frac{d}{n}} + \frac{18 n}{p}\right)\|\theta\|^2} \\
    &= \frac{66\sqrt\frac{n}{p} + \frac{12}{p\gam}}{1 - 72\sqrt{\frac{d}{n}} - \frac{18 n}{p}} \\
    &\le \frac{78\sqrt\frac{n}{p}}{1 - 72\sqrt{\frac{d}{n}} - \frac{18 n}{p}}
\end{align}

\noindent
where the last inequality uses $\gam \ge \frac{1}{\sqrt{nd}}$ and $\frac{\sqrt{n}}{p} \le \sqrt{\frac{n}{p}}$.

\end{proof}

\iffalse
\begin{lemma}[Terminal Task B Risk] \label{term-B} Assume data follows the ARM model as defined in Goldfarb \& Hand 2022. Then

\begin{align}
    R_B(f_{\hbba}) \le ...
\end{align}

\noindent
where $R_B(\cdot)$ defines the risk on task B.

\end{lemma}

\begin{proof}

By definition,

\begin{align}
    R_B(f_{\hbba}) &= (\hbba - O\beta)^\top O\Sigma O^\top (\hbba - O\beta) + \s^2 \\
    &= (\hba + \hbb - \mathcal{P}_{B^\top}\hba - O\beta)^\top O\Sigma O^\top (\hba + \hbb - \mathcal{P}_{B^\top}\hba - O\beta) \nonumber\\&\qquad + \s^2 \\
    &= ((I-\mathcal{P}_{B^\top})\hba + \hbb - O\beta)^\top O\Sigma O^\top ((I-\mathcal{P}_{B^\top})\hba + \hbb - O\beta) \nonumber\\&\qquad + \s^2 \\
    &= (\mathcal{P}_{{B^\top}^\perp}\hba + \hbb - O\beta)^\top O\Sigma O^\top (\mathcal{P}_{{B^\top}^\perp}\hba + \hbb - O\beta) + \s^2
\end{align}

\noindent
\textcolor{red}{TODO: just need to show that this quantity scales less than the forgetting.}

\end{proof}

\begin{theorem}[Combined Terminal Risk] Assume data follows the ARM model as defined in Goldfarb \& Hand 2022. Define $R_A(\cdot)$ to be the risk on task A and $R_B(\cdot)$ to be the risk on task B. Then

\begin{align}
    R_A(f_{\hbba}) + R_B(f_{\hbba}) \le ...
\end{align}

\end{theorem}

\begin{proof}

Proof follows from adding the bounds from Corollary \ref{term-A} and Lemma \ref{term-B}. \\
\\
\textcolor{red}{TODO}

\end{proof}
\fi

\vskip 0.2in
\bibliography{sample}

\end{document}